\documentclass[11pt]{article}
\usepackage[utf8]{inputenc}
\usepackage[T1]{fontenc}
\usepackage[margin=1in]{geometry}
\usepackage{amsmath,amssymb}
\usepackage{graphicx}
\usepackage{longtable}
\usepackage{caption}
\usepackage{placeins}
\usepackage{enumitem}
\usepackage{natbib}
\usepackage{microtype}
\usepackage{url}
\usepackage{hyperref}

\newtheorem{proposition}{Proposition}
\newtheorem{corollary}{Corollary}

\hypersetup{
  colorlinks=true,
  linkcolor=blue,
  citecolor=blue,
  urlcolor=blue
}

\title{When Known Physics Helps Neural PDE Models:\\
Residual Constraints Out-Regularize Generic Priors for Nonlinear Dynamics}

\author{
Zahra Farazpay \\
Department of Physics \\
Louisiana Tech University \\
Ruston, LA, USA \\
\and
Aniruddha Bora \\
Department of Computer Science \\
Texas State University \\
San Marcos, TX, USA \\
\texttt{aniruddha\_bora@txstate.edu}
}

\date{September 2026}

\begin{document}
\maketitle
\begin{abstract}
Neural PDE surrogates increasingly incorporate structural priors, yet it is often unclear whether their gains arise from physics-specific information or simply from regularization and training choices. We evaluate several such priors under a common protocol against a matched from-scratch neural operator baseline. Our central result is that a known-equation residual consistently outperforms the best generic regularizer at equal tuning budget. At fixed capacity this benefit appears across linear and nonlinear PDEs, but a capacity sweep reveals a sharp distinction: the advantage persists and grows for Burgers, KdV, and Allen--Cahn, while collapsing toward or below parity for linear heat and advection--diffusion. Thus, the durable value of the residual is specific to nonlinear operators. We further falsify a pre-registered hypothesis that the benefit is activated only by data sparsity: the residual remains advantageous even under full supervision. Its usefulness does, however, have a clear boundary. Under grid under-resolution, nonlinear coarse fields no longer satisfy the naive governing-equation residual, and enforcing it becomes actively harmful. In contrast, cross-family pretraining and in-context conditioning fail to outperform the strong from-scratch baseline in the regime studied. Together, these results identify when known physics provides non-redundant information to neural PDE models, when it does not, and when enforcing it introduces bias.
\end{abstract}
\section{Introduction}
Neural surrogates for partial differential equations (PDEs) increasingly incorporate known structure through governing-equation penalties, cross-family pretraining or conditioning, and discretization-invariant operator architectures \citep{raissi2019,li2024pino,subramanian2023,herde2024,mccabe2024,yang2023,li2020fno}. The usual premise is that such structure improves generalization. Yet structural priors are often compared with weak references---for example, an unregularized model or a from-scratch model whose own inductive biases are not tuned---making it difficult to separate the value of the specific prior from the generic value of regularization or pretraining. This broader baseline problem has also been documented in evaluations of learned PDE solvers \citep{mcgreivy2024}.

We study a narrower question: \emph{when does known structure improve neural PDE prediction beyond a strong baseline?} Our baseline is a from-scratch neural operator with matched architecture and validation-selected generic regularization. The headline residual and capacity comparisons use one-step training in both the physics and generic arms; multi-step pushforward is evaluated separately as a physics-free rollout-stabilization control and is used in the transfer and generalization experiments as specified in Appendix~\ref{app:hparams}. Every structural prior is evaluated under the same rollout metric and, crucially, the known-equation residual is compared with the \emph{best generic regularizer at equal tuning budget}, rather than with an unregularized network. We use heat and advection--diffusion as linear controls and Burgers, KdV, and Allen--Cahn as nonlinear families.

Our main finding is that the known-equation residual is the one structural prior that reliably clears this bar, but its durable advantage is sharply conditioned. At fixed width, the residual improves held-out prediction for all five families. As capacity grows, however, the advantage persists and grows only for the nonlinear families; for the linear controls it collapses toward parity or below it. We therefore describe the result as a \emph{capacity-robust advantage for nonlinear operators}, rather than as a fixed-width ordering by nonlinearity. The residual also remains beneficial at full supervision, falsifying our pre-registered hypothesis that its value is activated only by data sparsity.

The positive result has an equally sharp failure mode. When nonlinear dynamics are under-resolved, the true coarse field no longer satisfies the naive coarse-grid equation because unresolved scales induce a closure term. Enforcing the resulting inaccurate residual reverses the benefit and can make the physics-penalized model substantially worse than the generic baseline. This identifies resolvable evaluation of the governing equation as a necessary condition for using the residual as a training constraint.

Finally, we place this result against cross-family transfer. In the low-source-diversity, modest-target-data regime studied here, pretraining and in-context conditioning do not beat the strong from-scratch baseline once more than a single target trajectory is available. Additional relevance-gradient controls are consistent with optimization-basin compatibility being an important determinant of transfer in this regime, rather than source relevance alone; we keep this interpretation deliberately scoped and report the full comparison in the main results.

\paragraph{Contributions.} We make three main contributions: (i) a controlled comparison showing that a known-equation residual out-regularizes equally tuned generic alternatives, with a capacity-robust advantage specific to nonlinear PDEs; (ii) a mechanistic resolution boundary showing that the benefit reverses when nonlinear coarse fields violate the residual through spectral-truncation closure; and (iii) a controlled negative result showing that cross-family transfer does not outperform the strong from-scratch baseline in the regime studied. We additionally use pushforward controls, a relevance gradient for transfer, and resolution-versus-geometry tests to delineate which structural biases survive stronger baselines and which do not. Detailed architecture settings, pre-registration records, derivations, replication tables, and per-condition statistics are provided in the appendix.
\section{Methods}
\subsection{PDE benchmark and neural operator}
We study five one-dimensional periodic PDE families: heat and advection--diffusion as linear controls, and viscous Burgers, KdV, and Allen--Cahn as nonlinear systems,
\begin{align}
\text{heat:} \quad & u_t = \nu\,u_{xx}, &
\text{advection--diffusion:} \quad & u_t = -c\,u_x + \nu\,u_{xx}, \nonumber\\
\text{Burgers:} \quad & u_t = -u\,u_x + \nu\,u_{xx}, &
\text{KdV:} \quad & u_t = -u\,u_x - \delta\,u_{xxx}, \nonumber\\
\text{Allen--Cahn:} \quad & u_t = \varepsilon\,u_{xx} + u - u^3. & &
\label{eq:pde_families}
\end{align}
Reference trajectories are generated with Fourier pseudo-spectral spatial discretization and ETDRK4 time integration, with dealiasing for nonlinear terms. Initial conditions are smooth random fields with controlled Fourier content. Unless resolution is itself the experimental variable, the modeling grid is chosen so that the reference trajectory satisfies the discrete governing equation to high accuracy.

We use two coefficient regimes. In the headline and capacity experiments, coefficients are fixed within each PDE family, so generalization is across initial conditions at a fixed operator. In the transfer and relevance experiments, coefficients are sampled trajectory-wise, so evaluation requires generalization across both initial conditions and PDE parameters. Exact coefficients, sampling ranges, time steps, and trajectory counts are given in Appendix~\ref{app:families}.

Our base surrogate is a residual-form one-dimensional Fourier Neural Operator (FNO) \citep{li2020fno}. The headline residual and capacity experiments use matched one-step training ($K=1$) in both the physics and generic arms. Multi-step pushforward supervision \citep{brandstetter2022} is evaluated separately as a rollout-stabilization control and is used in the transfer, resolution-invariance, and geometry experiments, as detailed in Appendix~\ref{app:hparams}. All models are optimized with Adam, warmup/cosine decay, and gradient clipping, and are evaluated by held-out rollout relative $L_2$ error at a fixed horizon. Additional U-Net and Transformer backbones are used for transfer controls.

\subsection{Physics residual and controlled baselines}
For an input snapshot $u_0$ and predicted next state $u_1$, with governing equation $u_t=F(u)$, we evaluate the pointwise Crank--Nicolson residual
\begin{equation}
 r(u_0,u_1)
 = \frac{u_1-u_0}{\Delta t}
 - \tfrac12\big(F(u_0)+F(u_1)\big),
 \qquad
 \mathcal{L}_{\mathrm{phys}}
 = \lambda\,\big\langle r^2\big\rangle.
\label{eq:physics_residual}
\end{equation}
The residual is evaluated at all $N$ collocation points, including locations not used by the masked data term. The physics arm therefore minimizes the observed-data loss together with this equation constraint,
\begin{equation}
\mathcal{L}
= \big\langle (u_1-\hat u_1)^2\big\rangle_{\mathrm{obs}}
+ \mathcal{L}_{\mathrm{phys}}.
\label{eq:training_objective}
\end{equation}
To distinguish physics-specific information from generic regularization, the physics arm is not compared with an unregularized model. Instead, for each family and data regime we tune the physics coefficient and, with a matched search budget, tune generic alternatives consisting of data-only training, weight decay, and input-noise augmentation. Configuration selection uses a seed-disjoint validation ensemble; the test set is never used for selection. We report the test error of the best validation-selected configuration in each arm. Linear families serve as controls for whether any durable gain depends on operator nonlinearity. The resolved-grid validity of Eq.~\ref{eq:physics_residual} and its failure under nonlinear coarse-graining are developed in Appendix~\ref{app:residual}.

\subsection{Resolution, capacity, and transfer tests}
The capacity sweep varies model width over roughly an order of magnitude at full supervision. The resolution sweep generates reference dynamics on a fine grid and progressively coarsens the grid on which the model and residual are evaluated. For transfer, source families are used either for pretrain--finetune initialization or as support trajectories for FiLM/attention-based conditioning; every transfer arm is compared with a from-scratch model trained on exactly the same target trajectories.

\subsection{Statistical protocol and pre-registration}
Unless stated otherwise, comparisons use five random seeds controlling initialization, sampling, and training stochasticity. We bootstrap the paired error difference or error ratio across seeds/repetitions and regard an effect as resolved when the corresponding $95\%$ interval excludes zero (difference) or one (ratio). Before the deciding sparsity experiment we pre-registered the hypothesis that the residual advantage would vanish at dense supervision and cross over below a bandwidth-set threshold; the data falsify that hypothesis. The complete pre-registration record is in Appendix~\ref{app:prereg}.
\section{Results}
\subsection{Pushforward training strengthens rollout stability}

\begin{figure}[tb]
\centering
\includegraphics[width=0.62\linewidth]{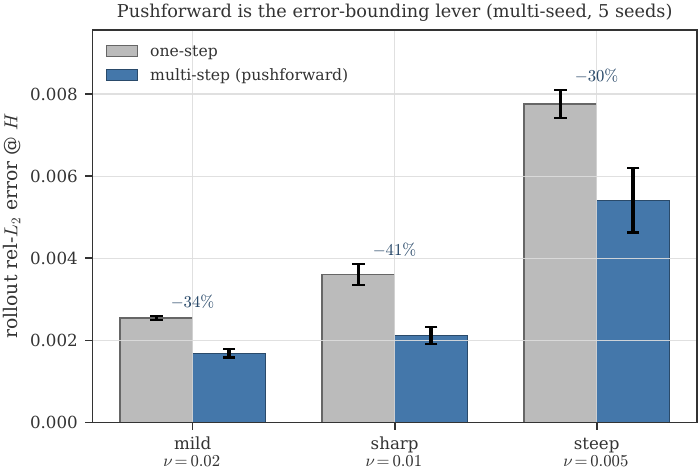}
\caption{\textbf{Pushforward, not physics, strengthens rollout stability.} One-step versus multi-step (pushforward) training across three viscosity regimes (5 seeds); pushforward reduces rollout error by 30--41\%. This is a physics-free baseline-strengthening control; the headline residual and capacity comparisons remain matched one-step experiments.}
\label{fig:pushforward}
\end{figure}

Figure~\ref{fig:pushforward} isolates a physics-free way to strengthen rollout stability. Across the viscosity ladder (mild/sharp/steep, five seeds), multi-step pushforward training reduces rollout error relative to one-step training by 34\%, 41\%, and 30\%, with confidence intervals excluding zero. We treat this as a control rather than a contribution: it shows that rollout-aware supervision can materially improve a from-scratch model without injecting equation information. The headline residual and capacity experiments in Fig.~\ref{fig:headline}, however, keep both the physics and generic arms on the same one-step protocol so that their difference isolates the residual constraint from the generic regularization alternatives.

\subsection{A known-equation residual out-regularizes generic priors}
\begin{figure*}[t]
\centering
\includegraphics[width=0.96\textwidth]{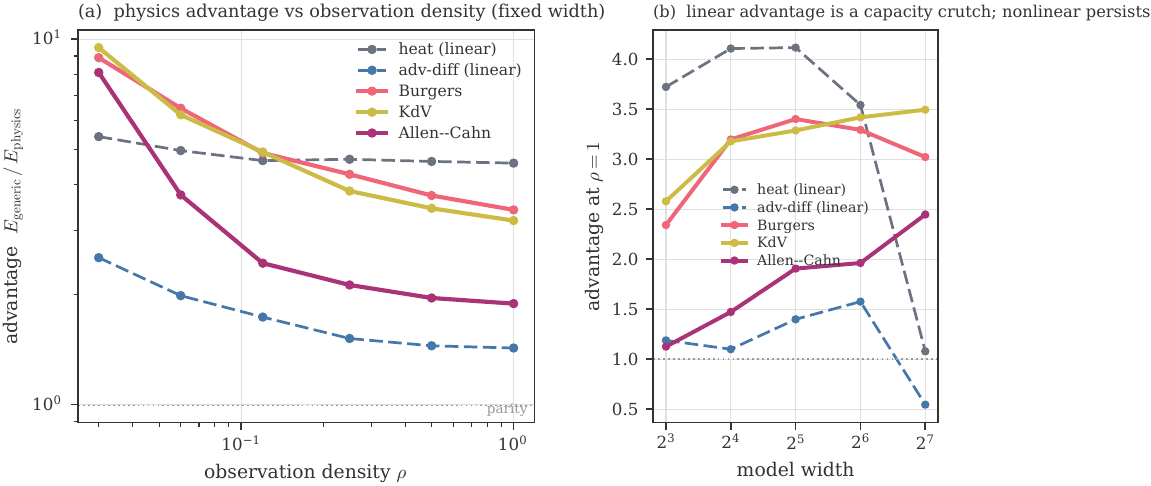}
\caption{\textbf{Known-equation residuals provide a capacity-robust advantage for nonlinear operators.} \textbf{(a)} Advantage $E_{\mathrm{generic}}/E_{\mathrm{physics}}$ versus observation density $\rho$ at fixed width; values above one favor the physics residual. The residual wins across all tested densities, including full supervision. \textbf{(b)} At full supervision, increasing width separates linear and nonlinear behavior: the nonlinear families retain a clear advantage while the linear controls approach or cross parity.}
\label{fig:headline}
\end{figure*}

Figure~\ref{fig:headline} compares the validation-selected physics arm with the best generic regularizer at equal tuning budget. At fixed width, the residual improves held-out prediction at every observation density. At full supervision, Burgers improves from $5.8\times10^{-5}$ to $1.7\times10^{-5}$ ($3.4\times$), KdV by $3.2\times$, and Allen--Cahn by $1.9\times$. The advantage grows as data become sparse, but it never switches on at a sparsity threshold: it is already present at $\rho=1$.

The fixed-width ratio itself does \emph{not} order families by nonlinearity: heat exhibits a large ratio because the residual essentially supplies its linear map, while absolute errors are already $O(10^{-6})$. The decisive experiment is the capacity sweep. At the largest width, the nonlinear families retain advantages of $3.1\times$ for Burgers (CI $[2.95,3.20]$), $3.5\times$ for KdV ($[3.30,3.75]$), and $2.5\times$ for Allen--Cahn ($[2.17,2.73]$). By contrast, heat falls to $1.95\times$ with an interval including parity ($[0.76,3.03]$), and advection--diffusion falls below parity to $0.64\times$ ($[0.49,0.84]$). The robust statement is therefore a capacity-persistent nonlinear/linear dissociation, not simply that larger fixed-width ratios imply greater nonlinearity.

Ground-truth residual checks verify that the resolved-grid comparison is well posed: the reference residual is near machine precision for the linear families and small for the nonlinear families. The pre-registered sparsity-gating hypothesis is falsified because no dense-supervision inert regime appears; the monotone sparsity trend is real, but the residual behaves as a better inductive bias rather than a constraint whose value turns on only when labels are scarce.

\begin{figure}[t]
\centering
\includegraphics[width=0.92\linewidth]{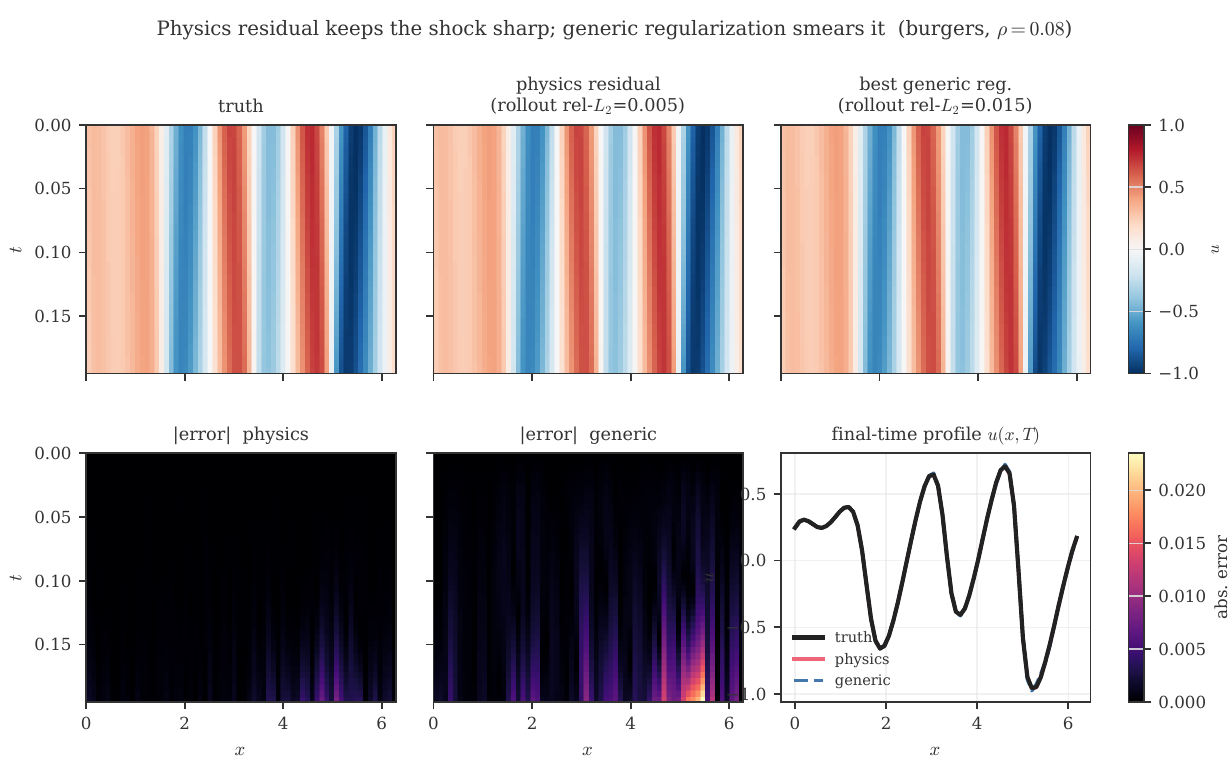}
\caption{\textbf{Illustrative Burgers rollout at sparse supervision ($\rho=0.08$).} The physics-penalized rollout preserves the shock more accurately than the best generic-regularized model (rollout error $5.2\times10^{-3}$ versus $1.5\times10^{-2}$). Quantitative multi-seed results are in Fig.~\ref{fig:headline}.}
\label{fig:headline_soln}
\end{figure}

\subsection{Non-redundant operator content predicts the capacity-persistent gain}
\label{sec:redundancy}

The closure analysis characterizes \emph{when} the residual is a valid constraint (resolution). A second operator property characterizes \emph{how much durable value} a valid residual carries as model capacity grows. Let $G_{\Delta t}\colon u\mapsto u(t{+}\Delta t)$ be the one-step solution operator and $\delta(u)=G_{\Delta t}(u)-u$ the update. Define the \emph{non-redundant content}
\[
\rho_{\mathrm{nr}}(F)=\frac{\|\delta - A^\star u\|}{\|\delta\|},\qquad A^\star=\arg\min_{A\ \text{linear, transl.-invariant}}\|\delta-Au\|,
\]
the relative norm of the update not representable by the best linear operator; equivalently, the optimum is the per-wavenumber Wiener filter $A^\star(k)=\langle\hat\delta(k)\,\hat u(k)^*\rangle/\langle|\hat u(k)|^2\rangle$.

\begin{proposition}[Non-redundancy dichotomy]\label{prop:redundancy}
$\rho_{\mathrm{nr}}(F)=0$ if and only if $F$ is linear. For linear $F$, $G_{\Delta t}=e^{L\Delta t}$ acts as a Fourier multiplier, so $\hat\delta(k)=(e^{\hat L(k)\Delta t}-1)\hat u(k)=M(k)\hat u(k)$; the optimal filter recovers $A^\star=M$ exactly and the residual vanishes. For nonlinear $F$ the flux couples wavenumbers, no linear operator represents $\delta$, and $\rho_{\mathrm{nr}}>0$.
\end{proposition}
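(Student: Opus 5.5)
The plan is to prove the two directions separately. First, I will make explicit the hypothesis that ``$\rho_{\mathrm{nr}}(F)=0$'' requires: the norm $\|\cdot\|$ is an expectation over the input distribution $\mu$ of initial snapshots, and $\mu$ is rich enough. Concretely, I will assume that the support of $\mu$ contains a nonempty open set of smooth fields, or at least all small rescalings $su$, $s\in(0,s_0]$, of some nonzero field $u$. Without such an assumption the ``only if'' direction is false. For example, a $\mu$ concentrated on constant fields, or on steady states of Allen--Cahn, makes $\delta$ trivially linear in $u$ on the support. So $\rho_{\mathrm{nr}}=0$ will be read as: $\delta(u)=A^\star u$ $\mu$-almost everywhere, hence, by continuity of $G_{\Delta t}$ on smooth periodic data, everywhere on the support.

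\emph{Linear direction.} If $F=L$ is linear and translation invariant, as in heat and advection--diffusion, then $L$ is a Fourier multiplier $\hat L(k)$, and $G_{\Delta t}=e^{L\Delta t}$. Hence $\hat\delta(k)=M(k)\hat u(k)$ with $M(k)=e^{\hat L(k)\Delta t}-1$. Substituting into the Wiener formula gives
\[
A^\star(k)=\frac{M(k)\langle|\hat u(k)|^2\rangle}{\langle|\hat u(k)|^2\rangle}=M(k)
\]
at every $k$ with nonzero power. The remaining wavenumbers contribute nothing to $\|\delta-A^\star u\|$, since $\hat\delta(k)=M(k)\hat u(k)=0$ there almost surely. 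Therefore $\delta-A^\star u\equiv0$, and this direction is routine. It is also worth noting that, because the Wiener filter is the $L^2$ projection onto translation-invariant linear maps, $\rho_{\mathrm{nr}}=0$ is equivalent to $\delta$ lying exactly in that class on the support. This is what the converse uses.

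\emph{Nonlinear direction.} Write $F=L+N$, where $N$ is the quadratic flux $-uu_x$ (Burgers, KdV) or the cubic term $-u^3$ (Allen--Cahn). Suppose $\rho_{\mathrm{nr}}=0$, so that $G_{\Delta t}(u)=(I+A^\star)u$ on the support. Then, for the rescaled fields, $G_{\Delta t}(su)=s\,G_{\Delta t}(u)$ for all small $s$. I will expand $G_{\Delta t}(su)$ in $s$ using the Duhamel formula
\[
G_{\Delta t}(su)=s\,e^{L\Delta t}u+s^{m}\!\int_0^{\Delta t}\!e^{L(\Delta t-\tau)}N\!\big(e^{L\tau}u\big)\,d\tau+O(s^{m+1}),
\]
with $m=2$ in the quadratic cases and $m=3$ in the cubic case. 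The linear part of Allen--Cahn, $\varepsilon u_{xx}+u$, stays inside $L$. Homogeneity of degree one forces the $s^m$ coefficient to vanish for every $u$ in the support.

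I then test this with a single mode $u=\cos(kx)$, or with an open-set perturbation containing one, and compute in Fourier space:
\begin{itemize}[nosep]
\item For the quadratic flux, $N(e^{L\tau}u)$ has a component at wavenumber $2k$ with amplitude proportional to $k\,e^{2\mathrm{Re}\hat L(k)\tau}$. This component is propagated by $e^{\hat L(2k)(\Delta t-\tau)}$, and its $\tau$-integral is a nonzero exponential integral. It is nonzero because the integrand has a fixed complex phase structure with a nonvanishing leading term, or, more simply, because it is analytic in $\Delta t$ with nonzero first derivative $k/2$ at $\Delta t=0$, which gives genericity in $\Delta t$. This is exactly the statement that ``the flux couples wavenumbers'': a translation-invariant linear map cannot move energy from mode $k$ to mode $2k$.
\item For $-u^3$, the analogous term produces a mode $3k$.
\end{itemize}
Either way this contradicts $\rho_{\mathrm{nr}}=0$, so $\rho_{\mathrm{nr}}>0$.

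The main obstacle is making this last step airtight for \emph{every} fixed $\Delta t$, not just generic ones. The $2k$ (or $3k$) Duhamel coefficient is an explicit combination of exponentials in $\Delta t$, and for dispersive KdV it could in principle vanish at isolated resonant times. My first attempt will be to vary $k$ across the support of $\mu$: for a fixed $\Delta t$, at most a discrete set of $k$ can be resonant. If that fails, I will state the proposition for all but a discrete set of $\Delta t$, which suffices for the paper's use.
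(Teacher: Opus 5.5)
Your route is the paper's own, made rigorous. The paper gives no argument beyond the heuristic sentences inside the proposition: Fourier multiplier and Wiener filter for linear $F$, and ``the flux couples wavenumbers'' for nonlinear $F$. Your homogeneity-plus-Duhamel expansion is a sound way to turn the second of these into a proof. You are also right that the ``only if'' direction needs a non-degeneracy hypothesis on the input law, which the paper leaves implicit.

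Your examples for that hypothesis do not quite work, though. On constant fields for Burgers and KdV, or on Allen--Cahn steady states, $\delta\equiv0$ and $\rho_{\mathrm{nr}}$ is $0/0$. Allen--Cahn on several constants is genuinely nonlinear, so it is not a counterexample either. A clean counterexample is a $\mu$ supported on the translates of one $u_0$ whose Fourier coefficients never vanish; translation-equivariance of $G_{\Delta t}$ then makes $A^\star(k)=\hat\delta_0(k)/\hat u_0(k)$ exact. Note also that an open set in the support does not by itself supply small rescalings. You need real-analyticity of the flow map to carry $G_{\Delta t}=I+A^\star$ from that set to a neighbourhood of $0$, or else you should assume the rescaling condition directly.

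The step that fails is your first repair of the KdV resonance. On $[0,2\pi)$ the $s^2$ coefficient for a mode pair $(k_1,k_2)$ carries the factor $1-e^{-3i\delta k_1k_2(k_1+k_2)\Delta t}$; your single-mode case is $k_1=k_2=k$. Since $k_1k_2(k_1+k_2)$ is always even, at every $\Delta t\in\frac{\pi}{3\delta}\mathbb{Z}_{>0}$ all quadratic interactions are resonant simultaneously. The whole $s^2$ coefficient then vanishes for every mean-zero input. Varying $k$ or using multi-mode data cannot help, and ``at most a discrete set of $k$'' is vacuous since $k\in\mathbb{Z}$ anyway. Your fallback is therefore valid only for a weaker statement than the one claimed.

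The fix is already inside your argument: homogeneity kills \emph{every} Taylor coefficient of $s\mapsto G_{\Delta t}(su)$ beyond the first, so go to order $s^3$. For $u=\cos(kx)$ the $s^3$ coefficient at wavenumber $k$ is a resonant self-interaction. In the variables $a_k=e^{-i\delta k^3 t}\hat u_k$ it equals $\frac{i\Delta t}{48\delta k}\big(\frac{e^{i\theta}-1}{i\theta}-1\big)$ with $\theta=6\delta k^3\Delta t$. It never vanishes, because $|\sin(\theta/2)/(\theta/2)|<1$ for $\theta\neq0$. This yields $\rho_{\mathrm{nr}}>0$ for KdV at every $\Delta t>0$. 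Burgers and Allen--Cahn are fine as you have them, since their integrands are real exponentials of one sign. At those special times the lowest-order obstruction to linearity is an amplitude-dependent phase shift rather than transfer to new wavenumbers. So the paper's one-line heuristic is incomplete there, even though its conclusion holds.
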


\smallskip\noindent\emph{Empirical prediction.} The motivation is the indirect-data reading: at large capacity a model recovers any linear map from data unaided, so the residual's \emph{durable} value is its non-redundant content $\rho_{\mathrm{nr}}$. Across the five families, $\rho_{\mathrm{nr}}$ predicts the capacity-robust advantage $E_{\mathrm{generic}}/E_{\mathrm{physics}}$ at the largest width (Pearson $0.90$; Fig.~\ref{fig:redundancy}): the two families with $\rho_{\mathrm{nr}}=0$ (heat, advection--diffusion) are exactly those whose advantage sits at or below parity, while the three with $\rho_{\mathrm{nr}}>0$ retain a durable advantage. This identifies \emph{operator linearity}, not smoothness, as the operative variable: advection--diffusion is linear but \emph{not} smoothing (it preserves the spectrum), yet $\rho_{\mathrm{nr}}=0$ correctly predicts its advantage falls below parity, a case the smoothness reading handles awkwardly. We report this as a measured predictor, not a magnitude law. With five families the correlation is suggestive rather than conclusive, and $\rho_{\mathrm{nr}}$ resolves the linear/nonlinear dissociation and the weakest nonlinear family (Allen-Cahn) but not the Burgers-versus-KdV ordering; a stiffness-weighted variant ($|\hat L(k)|^2$-weighting, motivated a priori by the harder-to-learn stiff modes) was tested and \emph{did not} improve the fit, so that residual ordering is not a linear-stiffness effect.

\begin{figure}[tb]
\centering
\includegraphics[width=0.66\linewidth]{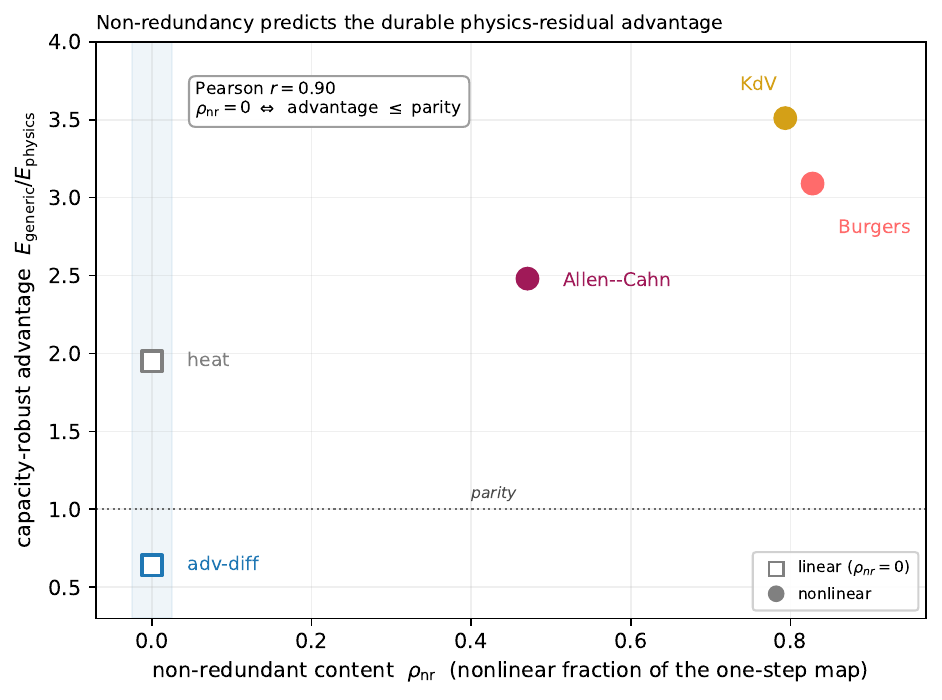}
\caption{\textbf{Non-redundancy predicts the durable physics-residual advantage (Prop.~\ref{prop:redundancy}).} Capacity-robust advantage $E_{\mathrm{generic}}/E_{\mathrm{physics}}$ at the largest width versus the non-redundant content $\rho_{\mathrm{nr}}$, the fraction of the one-step map no linear operator captures. The two linear families (open squares) sit at $\rho_{\mathrm{nr}}=0$ and straddle parity (heat just above, advection--diffusion below); the three nonlinear families lie at $\rho_{\mathrm{nr}}>0$ with durable advantages (Pearson $0.90$). $\rho_{\mathrm{nr}}=0$ coincides with the advantage collapsing to or below parity, identifying linearity rather than smoothness as the operative variable.}
\label{fig:redundancy}
\end{figure}

\subsection{The benefit reverses under nonlinear under-resolution}
When the modeling grid is progressively coarsened, the nonlinear advantage reverses. Burgers and KdV reach physics-to-generic advantage ratios of roughly $0.16$--$0.17$ in the most under-resolved cases, meaning the residual penalty is about five to six times worse than the generic baseline. In the same sweep, the residual of the \emph{true} coarse field grows from roughly $10^{-7}$ to $10^{-1}$. The linear heat control remains stable because Fourier truncation commutes with its linear operator.

The mechanism is the spectral-truncation closure term. Let $P_N$ denote projection onto resolved modes, $Q_N=I-P_N$, and define
\[
\tau_N(u)=F_N(P_Nu)-P_NF(u).
\]
For linear $F$, $\tau_N\equiv0$. For the quadratic Burgers flux,
\[
\tau_N(u)=-\tfrac12\partial_xP_N\!\left[(Q_Nu)(u+P_Nu)\right],
\qquad
\|\tau_N(u)\|_{H^{-1}}\le C_s\|Q_Nu\|_{L^2}\|u\|_{H^s}.
\]
Thus the true projected trajectory violates the naive coarse residual by an amount controlled by unresolved energy. Enforcing that residual introduces a bias that cannot be removed by more labels or model capacity; the remedy is to resolve the dynamics or model the missing closure.

\begin{figure}[t]
\centering
\includegraphics[width=0.96\linewidth]{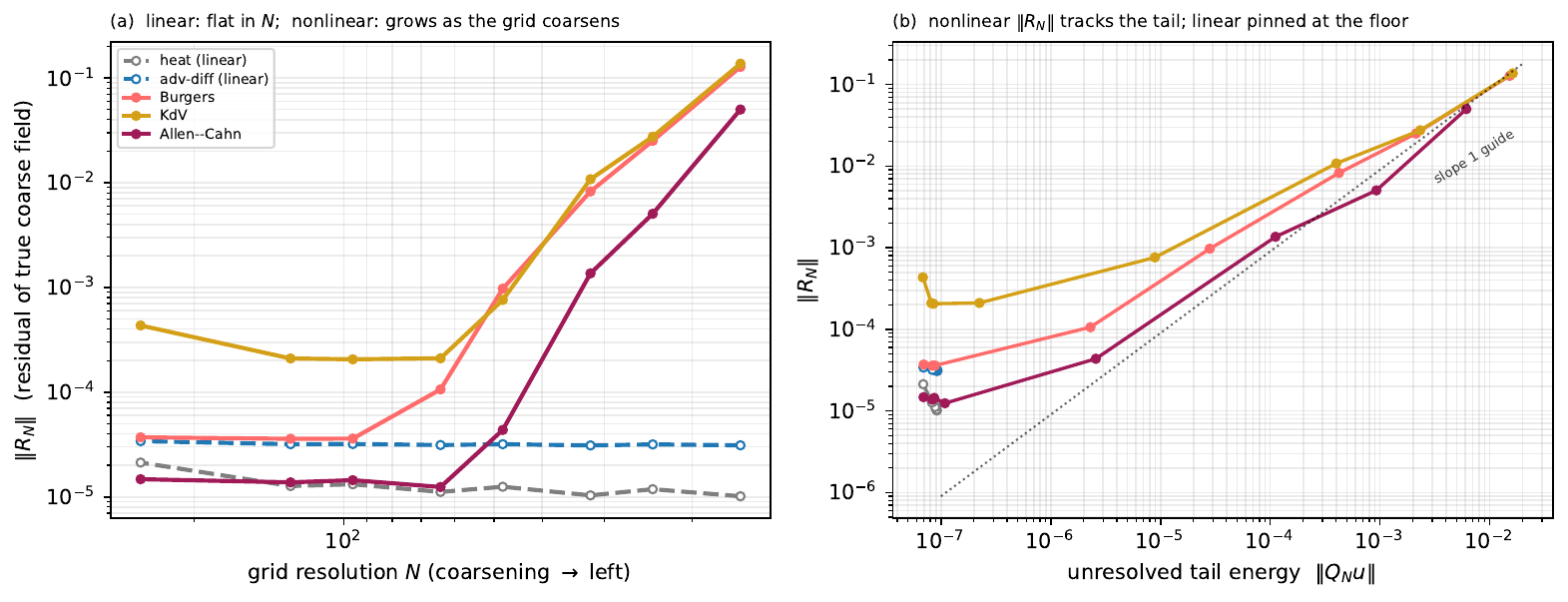}
\caption{\textbf{Resolution boundary.} The true coarse-field residual stays at the discretization floor for linear families but grows rapidly under coarsening for nonlinear families, tracking unresolved tail energy. This is the failure mode of the naive physics penalty under under-resolution.}
\label{fig:closure}
\end{figure}

\subsection{Cross-family transfer does not beat the strong baseline}
\begin{figure}[t]
\centering
\includegraphics[width=0.72\linewidth]{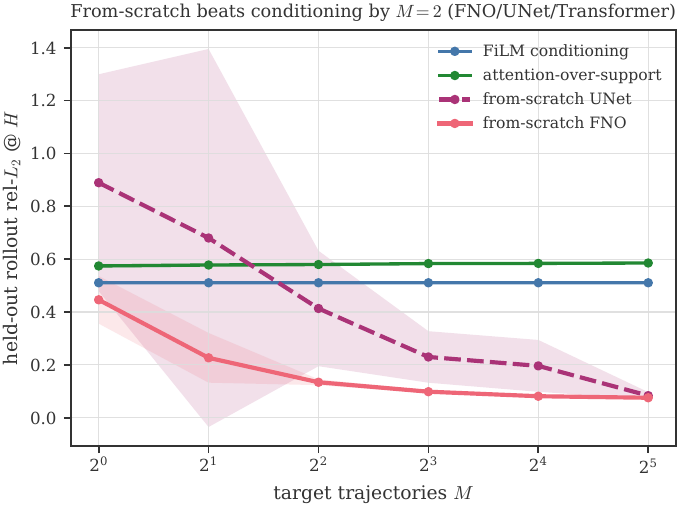}
\caption{\textbf{Cross-family transfer versus from-scratch training.} In-context FiLM and attention-over-support remain nearly flat as target trajectories increase, while from-scratch FNO and U-Net models improve rapidly and overtake them by $M=2$.}
\label{fig:conditioning}
\end{figure}

Cross-family conditioning and pretraining provide a useful contrast to the residual result. In the tested low-source-diversity regime, FiLM and attention-over-support do not improve with additional target support in the way the from-scratch models do; by $M=2$, the from-scratch FNO and U-Net are already below the conditioning arms (Fig.~\ref{fig:conditioning}). The Transformer meta-training loss converges, so the result is not explained simply by a visibly untrained conditioner. A separate relevance-gradient experiment, reproduced across Burgers, KdV, and Allen--Cahn and reported in Appendix~\ref{app:relmulti}, finds that related-PDE pretraining can produce negative transfer while a rich non-physical operator is approximately neutral. These controls are consistent with optimization-basin compatibility being an important determinant of transfer in this regime, but they do not establish a universal law about pretraining at larger source diversity or foundation-model scale.

Figure~\ref{fig:conditioning} reports the cross-family conditioning experiment: a model is trained on a set of source families and evaluated on a held-out target, either by pretrain-then-finetune or by in-context conditioning. Both in-context mechanisms FiLM modulation and Transformer attention-over-support produce held-out error curves that are \emph{flat} in the number of target trajectories $M$, and a from-scratch operator trained on the same M trajectories dives below them by $M=2$. The Transformer's meta-training converged (loss 12.7 $\to$ $2.6\times10^{-3}$, so the conditioner is not undertrained) yet it lands no better than the simpler FiLM arm, and the from-scratch FNO beats the U-Net co-baseline almost everywhere, closing the ``weak baseline'' and ``under-capacity'' objections \citep{mcgreivy2024}. One plausible explanation is the low source diversity: with four source families the conditioner learns to reproduce source dynamics, the support set barely moves the held-out prediction (hence the flat curves), while the from-scratch model uses every target trajectory.

\subsection{Source relevance alone does not predict positive transfer}

\begin{figure}[tb]
\centering
\includegraphics[width=\linewidth]{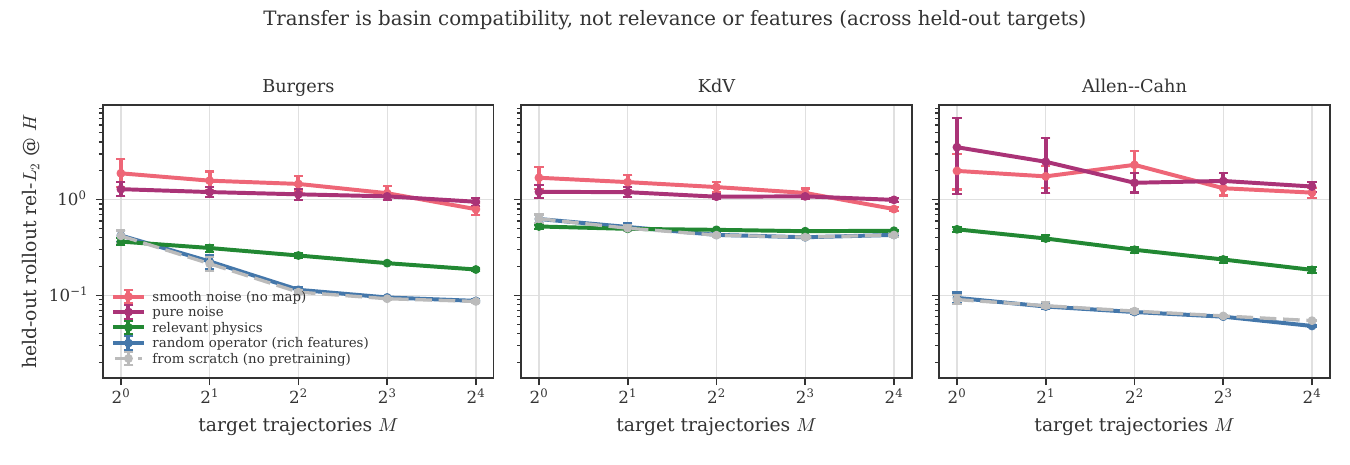}
\caption{\textbf{Transfer is more consistent with basin compatibility than source relevance across three held-out targets.} For each held-out target (Burgers, KdV, Allen--Cahn), held-out rollout error vs.\ adaptation trajectories $M$ for pretraining sources on a relevance gradient. Pretraining on relevant physics (green) fails to beat from-scratch (grey dashed) in the data-adequate regime and is a clear liability on all three targets; a rich non-physical operator (blue) is inert to marginally helpful; structureless pretraining (noise) is destructive. Related-PDE pretraining is worse than from-scratch in the data-adequate regime for all three targets.}
\label{fig:relevance}
\end{figure}

The negative-transfer result above shows that pretraining on related PDE families does not help a held-out target relative to training from scratch. It does not, by itself, identify \emph{why}. Two competing explanations are common in the foundation-model literature: that transfer is governed by the \emph{relevance} of the pretraining data (the ``right physics'' premise), or that it is governed by the \emph{features and capacity} a model acquires from being trained on anything sufficiently rich (the ``pretraining provides a useful initialization'' premise). These explanations make opposite predictions, and we separate them with a controlled relevance gradient.

We fix the target family (Burgers, held out) and pretrain one operator body per source condition, holding architecture, capacity, and pretraining budget fixed across conditions, then finetune on M target trajectories with multi-step (pushforward) supervision and measure held-out rollout error. The source conditions span a gradient from relevant physics to pure noise: \textbf{pde\_related}, real one-step pairs from the other PDE families (the ``right data''); \textbf{struct\_random}, spectrum-matched smooth inputs mapped through a \emph{fixed random smooth operator} (a consistent, learnable map carrying rich features but no physics); \textbf{noise\_map}, smooth inputs mapped to \emph{independent} smooth outputs (no consistent map to learn); and \textbf{pure\_noise}, white-noise inputs and outputs (no structure). A from-scratch model (random initialization, no pretraining) is the reference.

The result challenges both simple explanations. \textbf{In this cross-family regime, source relevance does not predict positive transfer and can instead produce negative transfer.} For all $M \geq 2$, pretraining on relevant physics produces clean \emph{negative} transfer: at $M=16$ the pde\_related model reaches a held-out error of 0.186 while the from-scratch reference reaches 0.094, a gap whose confidence intervals are cleanly separated. Relevant pretraining ties the from-scratch baseline only at the single-trajectory extreme ($M=1$: 0.365 vs 0.419, overlapping intervals), and is worse everywhere thereafter. There is no regime in which it wins.

\textbf{Feature richness is not the driver either.} The \textbf{struct\_random} condition---a model pretrained on a consistent but non-physical smooth operator that acquires rich, learnable features---is statistically indistinguishable from random initialization at every M (e.g., $M=16$: 0.088 vs 0.094, overlapping intervals throughout). Learning \emph{some} operator transfers nothing over not pretraining at all. And when the learnable map is removed (noise\_map, pure\_noise), pretraining is not merely unhelpful but actively destructive, leaving the model an order of magnitude worse and never recovering within the target-data range tested ($M=16$: 0.79 and 0.95 respectively). ``Enough features'' therefore buys nothing when the features are structured-but-irrelevant, and poisons the model when they are unstructured.

We read these controls as evidence that, in this regime, transfer is more consistent with \emph{optimization-basin compatibility} than with source relevance or feature richness alone: whether pretraining places the model in a parameter region that target finetuning can readily adapt from. Related-but-distinct physics can provide an initialization that is difficult to adapt to Burgers' shock-steepening dynamics, whereas a random smooth operator is approximately neutral. A control that triples the finetuning budget (450 versus 150 epochs at $M=16$) leaves the pde\_related error unchanged at 0.186 while from\_scratch remains at 0.094, suggesting that the observed negative transfer is not merely a consequence of too few finetuning epochs. Structureless pretraining is substantially worse and does not recover within the target-data range tested. These observations motivate basin compatibility as a plausible explanation for the transfer pattern, rather than establishing it as a universal mechanism. This dissection is bounded to the regime of low source-family diversity (four source families) and modest target data; it does not establish that data relevance never matters at the scale and diversity of PDE foundation models, and it is consistent with concurrent reports that from-scratch training beats pretrained models out of distribution \citep{pita2025}.

\textbf{The pattern replicates across held-out targets.} Repeating the full gradient for KdV and Allen-Cahn (Fig.~\ref{fig:relevance}; Appendix~\ref{app:relmulti}) reproduces the central refutation on both: the most relevant source is again the worst. At the data-adequate end ($M=16$), pretraining on relevant physics is significantly worse than from-scratch for KdV ($0.47$ vs $0.43$) and markedly worse for Allen-Cahn ($0.185$ vs $0.055$, a $3.4\times$ penalty), with separated intervals in both cases; the rich non-physical operator is statistically inseparable from from-scratch for KdV and gives Allen-Cahn a small but significant gain ($0.048$ vs $0.055$); structureless pretraining is destructive throughout. Two boundary details temper the picture without overturning it: for KdV, relevant-physics pretraining provides a transient head start at the single sparsest point ($M=1$: $0.52$ vs $0.63$) that reverses by $M=4$, and for Allen Cahn the rich-operator basin is mildly compatible rather than perfectly neutral. Across these targets, source relevance alone does not predict the outcome; the results remain more consistent with basin compatibility than with relevance or feature richness by themselves.

\subsection{Resolution invariance does not imply geometry generalization}

\begin{figure}[tb]
\centering
\includegraphics[width=0.52\linewidth]{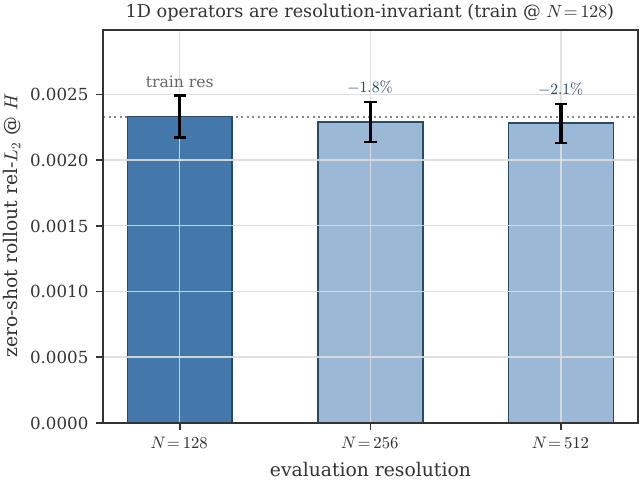}
\caption{\textbf{1D operators are resolution-invariant (scoped positive).} Trained at $N=128$, evaluated zero-shot at finer resolutions; rollout error is unchanged within $\sim$2\% across five seeds.}
\label{fig:resolution}
\end{figure}

\begin{figure}[tb]
\centering
\includegraphics[width=0.48\linewidth]{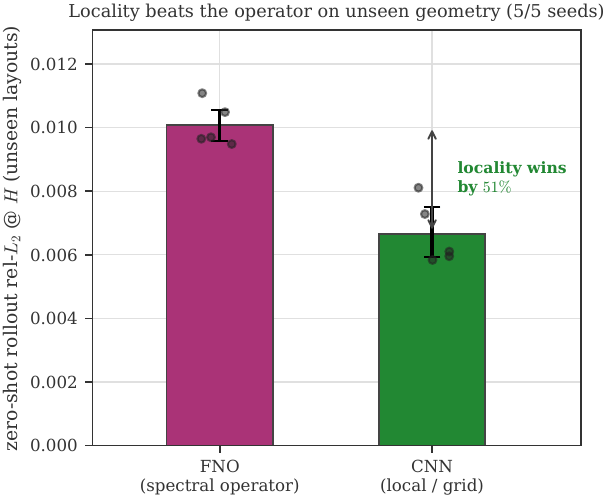}
\caption{\textbf{On unseen geometry, locality beats the operator (scoped tension).} Zero-shot rollout error on unseen obstacle layouts: a local CNN outperforms the FNO by $\sim$50\% across five seeds with separated intervals. Locality and spectral structure trade off on different generalization axes.}
\label{fig:geometry}
\end{figure}

\begin{figure*}[tb]
\centering
\includegraphics[width=\textwidth]{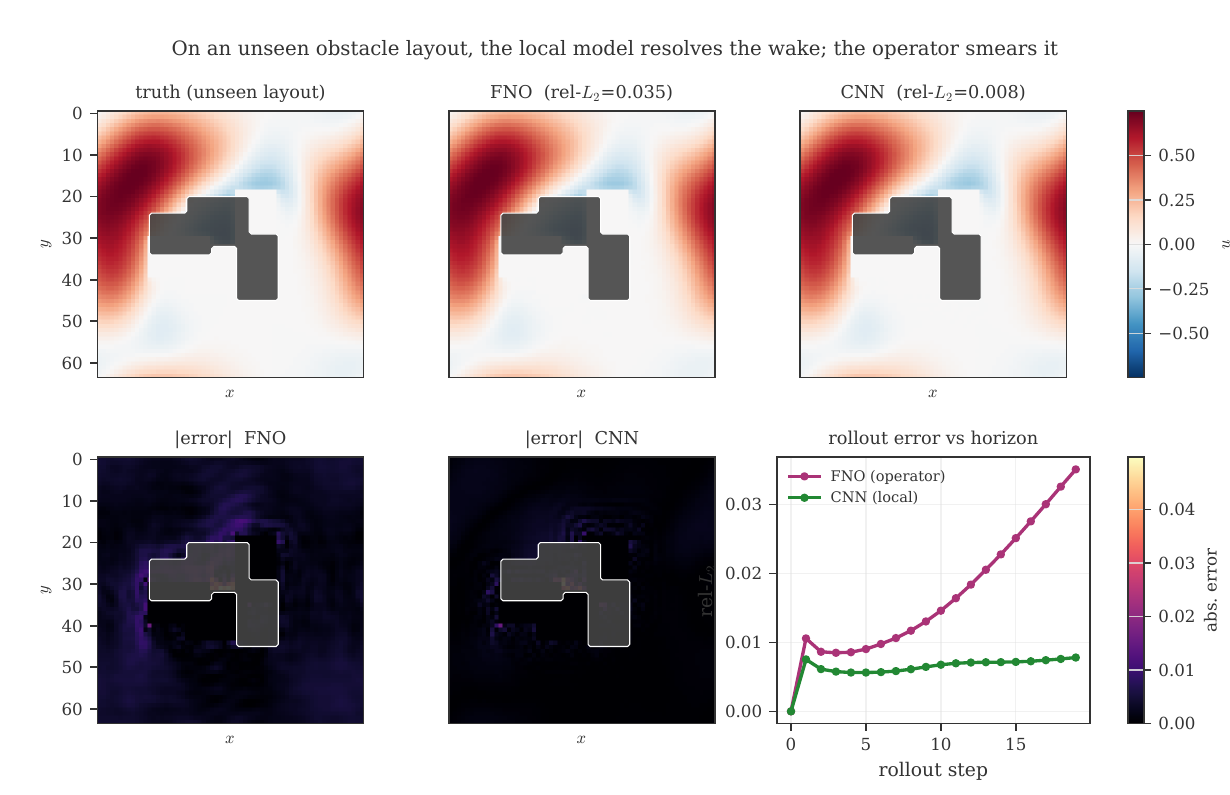}
\caption{\textbf{Geometry, in the field.} Truth, FNO, and CNN rollouts on an unseen obstacle layout (obstacles overlaid), with error fields and rollout error versus horizon. The local model resolves the wake; the operator smears it and its error grows over the rollout. Illustrative single case; the quantitative result is Fig.~\ref{fig:geometry}.}
\label{fig:geometry_soln}
\end{figure*}

Two structural properties survive as genuine but regime-bounded positives. First, one-dimensional operators are resolution-invariant (Fig.~\ref{fig:resolution}): trained at $N=128$ and evaluated zero-shot at $N=256$ and 512, rollout error changes by only -1.2\% to -2.9\% across five seeds---essentially flat, and if anything slightly better at higher resolution. This is a genuine but bounded positive: resolution invariance in this one-dimensional setting does not imply robustness to changes in geometry or spatial structure, and it does not imply operators are the right bias for every generalization axis.

Indeed, on geometry generalization the opposite holds (Fig.~\ref{fig:geometry}; a representative field is shown in Fig.~\ref{fig:geometry_soln}). On a 2D advection--diffusion system with per-trajectory random obstacle layouts, evaluated zero-shot on unseen layouts, a local convolutional model beats the spectral operator in all five seeds: FNO $1.01\times10^{-2}$ versus CNN $6.66\times10^{-3}$, a ~53\% advantage (bootstrap CI [38\%, 68\%]) with separated confidence intervals. Locality and spectral structure therefore trade off, each winning on a different generalization axis: the operator on resolution, the local model on geometry. This trade-off is itself task-dependent: in a harder geometry regime the margin narrows and can reverse, so we do not claim that a single architecture dominates. The scoped conclusion is that structural priors are neither uniformly helpful nor uniformly inert, and the contribution is in mapping which holds when.

\FloatBarrier
\section{Discussion}
The experiments support an information-channel interpretation of the known-equation residual, while also exposing its limits. First, the benefit persists at full supervision, so it is not merely a substitute for missing labels. Second, the benefit that survives increasing model capacity is specific to nonlinear operators: a sufficiently expressive model can recover the linear controls from data, whereas the nonlinear families retain a non-redundant gain from the equation. Third, that channel is useful only when the constraint itself is valid. Under nonlinear under-resolution, the residual encodes the wrong coarse-grid relation because unresolved modes induce closure terms, turning structural information into structural bias.

This interpretation is consistent with recent statistical-learning views of physics penalties as indirect data rather than ordinary regularization \citep{barajas2026}, but our evidence is empirical and prediction-side: we do not claim to establish a general loss-landscape mechanism.

The auxiliary experiments sharpen the scope of this conclusion. Multi-step pushforward training provides a strong, physics-free rollout-stabilization control. Because the headline residual and capacity comparisons use matched one-step training, their gains should be interpreted specifically as residual-versus-generic-regularization effects under that protocol, rather than as gains over a pushforward-trained baseline. Multi-step supervision is used in the transfer and generalization experiments as specified in Appendix~\ref{app:hparams}. Cross-family conditioning and pretraining do not clear the corresponding from-scratch baselines in the tested low-diversity regime, and the relevance-gradient controls show that semantically related source physics is not sufficient for positive transfer. Finally, the resolution and geometry experiments separate two forms of structural generalization: the FNO is effectively resolution-invariant in the one-dimensional test, yet a local CNN is better on unseen obstacle geometry. These results argue against treating ``more structure'' as a scalar quantity; the useful bias depends on the generalization axis. The linear/nonlinear capacity dissociation and the resolution reversal provide two concrete conditions that any broader theory must accommodate. Our results are also deliberately scoped to one-dimensional PDE families, fixed coefficients within each family, resolved grids except in the resolution study, and low source-family diversity. Whether the same boundaries persist at large multi-physics foundation-model scale remains open.

\section{Related Work}
Physics-informed neural networks and neural operators encode governing equations through residual penalties or operator-level constraints \citep{raissi2019,li2024pino}; neural operators such as FNOs target discretization-flexible mappings between function spaces \citep{li2020fno}. A separate line of work studies pretraining, foundation models, and in-context adaptation across PDE families \citep{subramanian2023,herde2024,mccabe2024,yang2023}. Our focus is not a new architecture but controlled attribution: whether these forms of structure improve prediction beyond a strong, equally tuned from-scratch baseline, and which regimes determine success or failure. We also connect the under-resolution failure of nonlinear residuals to the classical spectral-truncation/closure phenomenon; the full derivation and supporting experiments are in Appendix~\ref{app:residual}.

\section{Conclusion}
Known physics is not uniformly beneficial to neural PDE models. Under equal tuning budget, a known-equation residual outperforms generic regularization on resolved problems, and its advantage remains capacity-robust for nonlinear Burgers, KdV, and Allen--Cahn while collapsing toward or below parity for linear controls. The same residual becomes harmful when nonlinear dynamics are under-resolved because the coarse fields no longer satisfy the naive governing equation. Cross-family transfer, meanwhile, does not beat the strong from-scratch baseline in the regime studied, and source relevance alone does not predict successful transfer. Resolution-invariant operator behavior also does not guarantee geometry generalization, where locality can be the better inductive bias. The resulting picture is specific rather than universal: known equations help when they provide valid, non-redundant constraints; otherwise, additional structure can be neutral or harmful.

\FloatBarrier

\bibliography{references}
\bibliographystyle{plainnat}
\appendix
\section{PDE families and data generation}
\label{app:families}
The five PDE families are defined in Eq.~\ref{eq:pde_families}. This appendix records the full numerical and data-generation specification. All 1D families are solved on a periodic domain $[0,2\pi)$ with $N=64$ Fourier modes by a dealiased (2/3-rule) exponential time-differencing fourth-order Runge--Kutta (ETDRK4) integrator. Initial conditions are band-limited random fields (lowest $n_{\mathrm{modes}}$ Fourier modes excited, amplitude $a$). Snapshots are stored every $n_{\mathrm{sub}}$ substeps of size $\Delta t_{\mathrm{int}}$; the model's one-step interval is $\Delta t=n_{\mathrm{sub}}\Delta t_{\mathrm{int}}$.

\emph{Headline and capacity regime} (Fig.~\ref{fig:headline}, and the closure verification of App.~\ref{app:residual}): fixed coefficients --- heat $\nu=0.05$; adv-diff $c=1.0,\ \nu=0.03$; Burgers $\nu=0.02$; KdV $\delta=0.01$; Allen--Cahn $\varepsilon=0.002$  with substep $(\Delta t_{\mathrm{int}},n_{\mathrm{sub}})=(2\!\times\!10^{-4},25)$ for KdV and $(10^{-3},5)$ for the other four, giving a common snapshot stride $\Delta t=5\!\times\!10^{-3}$; $L=40$ stored snapshots, $64$ train / $32$ test trajectories. \emph{Transfer and relevance regime} (Figs.~\ref{fig:conditioning},~\ref{fig:relevance}): each trajectory draws its coefficient uniformly from heat $\nu\in[0.02,0.08]$; adv-diff $c\in[0.5,1.5],\ \nu\in[0.01,0.05]$; Burgers $\nu\in[0.01,0.05]$; KdV $\delta\in[0.002,0.02]$; Allen--Cahn $\varepsilon\in[0.0005,0.005]$, with $(\Delta t_{\mathrm{int}},n_{\mathrm{sub}})$ of heat $(2\!\times\!10^{-3},25)$, adv-diff and Burgers $(10^{-3},50)$, KdV $(2\!\times\!10^{-4},250)$, Allen--Cahn $(10^{-3},50)$. Both regimes use IC amplitude $a=1.0$ ($0.3$ for Allen--Cahn) and IC modes $=4$. The 2D geometry system used in the geometry-generalization experiment of Sec.~3.7 is $u_t + c\cdot\nabla u = \nu\Delta u$ on a $64\times64$ periodic grid with $\nu=2\!\times\!10^{-3}$, $c=(0.5,0.3)$, three random axis-aligned rectangular obstacles imposed as Dirichlet holes (field forced to zero inside, re-masked each step), explicit upwind-advection / 5-point-Laplacian finite differences, $\Delta t=4\!\times\!10^{-3}$, $12$ substeps, $L=20$ snapshots; each trajectory draws an independent obstacle layout.

\section{Physics-residual formulas}
\label{app:residual}
The residual loss is defined in Eq.~\ref{eq:physics_residual}. Spatial derivatives are spectral: $\partial_x^{(m)} u = \mathcal{F}^{-1}\!\big[(ik)^m \hat u\big]$. The right-hand side $F(u)$ of $u_t=F(u)$ is, per family,
\begin{align*}
F_{\text{heat}} &= \nu u_{xx}, & F_{\text{adv-diff}} &= -c u_x + \nu u_{xx}, & F_{\text{Burgers}} &= -u u_x + \nu u_{xx},\\
F_{\text{KdV}} &= -u u_x - \delta u_{xxx}, & F_{\text{Allen--Cahn}} &= \varepsilon u_{xx} + u - u^3. &&
\end{align*}
\textbf{Resolved-grid residual vs.\ coarse-grid closure.} Equation~\ref{eq:physics_residual} is exact only when the evaluation grid resolves the dynamics: the spectral derivatives in $F$ are accurate, and $r$ of the true trajectory is at the discretization floor (ground-truth $\langle r^2\rangle^{1/2}$: heat (lin.) $1.5e-11$; adv-diff (lin.) $4.8e-10$; Burgers $9.8e-06$; KdV $4.7e-06$; Allen--Cahn $1.4e-11$). On a \emph{coarsened} grid the same expression is no longer satisfied by the true coarse field coarse-graining a nonlinear operator introduces unclosed sub-grid terms, so $F$ evaluated on coarse fields omits a closure term and enforcing $r=0$ injects that omission as a systematic error (Sec.~3.4). We therefore distinguish the \emph{resolved-grid residual} we use (a correct constraint) from a \emph{coarse-grid closure} (which $r$ is not): the headline is scoped to the resolved regime, and the resolution sweep maps exactly where $r$ ceases to be a valid constraint.

\subsection{Consistency of the discrete constraint: a closure dichotomy}
\label{app:closure}

\textbf{Relation to classical spectral-truncation analysis.} The closure term $\tau_N$ formalized below is not new: it is the aliasing/closure commutator of spectral methods. That a Fourier--Galerkin truncation of a nonlinear conservation law need not satisfy the untruncated dynamicsbecause the quadratic interaction couples resolved and unresolved modes is the classical obstruction analyzed by \citet{tadmor1989}, who proved that naive Fourier methods can fail to converge for nonlinear conservation laws and introduced spectral viscosity as a remedy; $\tau_N$ is the same resolved unresolved product treated under dealiasing in the standard spectral-methods references \citep{canuto2006,hesthaven2007}, and is identical in form to the sub-grid stress of large-eddy simulation \citep{sagaut2006}. Our contribution is not this identity but its \emph{application to a learned penalty}: enforcing the discrete residual on a neural surrogate imports $\tau_N$ as an irreducible bias, and the linear/nonlinear dichotomy of $\tau_N$ becomes a diagnostic for when a known-equation residual is a valid training constraint. The following makes this precise.

\textbf{Setup.} On the periodic torus, let $P_N$ be the Fourier truncation onto the resolved modes $\{|k|\le N/2\}$ and $Q_N=I-P_N$, so $\|Q_N u\|_{L^2}^2=\sum_{|k|>N/2}|\hat u_k|^2$ is the unresolved energy. Write $F=L+B$ with $L$ the constant-coefficient linear part (a Fourier multiplier) and $B$ the nonlinear part ($B(u)=-\tfrac12\partial_x(u^2)$ for Burgers and KdV; $B(u)=-u^3$ for Allen--Cahn; $B\equiv0$ for the linear families). The dealiased ($2/3$-rule) Fourier--Galerkin operator is $F_N=P_N F P_N$, and the closure term (sub-grid commutator) is $\tau_N(u):=F_N(P_N u)-P_N F(u)$.

\begin{proposition}[Consistency dichotomy]\label{prop:closure}
Let $u\in H^s$, $s>\tfrac12$ (so $H^s\hookrightarrow L^\infty$ is a Banach algebra; $C_s$ denotes the embedding constant). Then, uniformly in $N$:
\emph{(i) Linear consistency.} If $B\equiv0$, then $\tau_N(u)=0$ for every $u$ and every $N$: a Fourier multiplier commutes with truncation, $P_N L P_N(P_N u)=P_N L u=P_N F(u)$.
\emph{(ii) Nonlinear closure.} For the quadratic flux $B(u)=-\tfrac12\partial_x(u^2)$,
\[
\tau_N(u)=-\tfrac12\,\partial_x P_N\big[(Q_N u)(u+P_N u)\big],
\]
the resolved--unresolved interaction, which vanishes iff $Q_N u=0$ and satisfies
\[
\|\tau_N(u)\|_{H^{-1}}\;\le\;C_s\,\|Q_N u\|_{L^2}\,\|u\|_{H^s},
\]
with $C_s$ independent of $N$ and $u$. For the cubic reaction $B(u)=-u^3$ the analogue holds in $L^2$ with a quadratic field factor, $\|\tau_N(u)\|_{L^2}\le C_s\|Q_N u\|_{L^2}\|u\|_{H^s}^2$.
\end{proposition}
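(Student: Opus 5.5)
Since $P_N^2=P_N$, the definition gives $\tau_N(u)=P_N F(P_N u)-P_N F(u)$. The whole proof then reduces to one algebraic identity for $F(P_N u)-F(u)$, followed by a product estimate in an algebra norm. I treat the dealiased operator as $F_N=P_N F P_N$, exactly as the setup defines it. If the $2/3$ rule is implemented as a further projection $P_{2N/3}$ applied to the nonlinearity, the same argument goes through with that projection in place of $P_N$: all we use is that it is an $L^2$-orthogonal Fourier projection that commutes with $\partial_x$.

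\emph{Step 1 (linear part, giving (i)).} $L$ is a constant-coefficient Fourier multiplier, so it commutes with $P_N$. Hence $P_N L P_N u = P_N L u$, and the linear contribution to $\tau_N$ cancels identically for every $N$. When $B\equiv0$ this is all of (i). For general $F=L+B$ it also shows that $\tau_N(u)=P_N\big[B(P_N u)-B(u)\big]$, so only the nonlinearity matters.

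\emph{Step 2 (quadratic flux).} Write $u=P_N u+Q_N u$ and factor the difference of squares:
\[
(P_N u)^2-u^2=(P_N u-u)(P_N u+u)=-(Q_N u)(u+P_N u).
\]
Applying $-\tfrac12\partial_x P_N$, and using that $\partial_x$ commutes with $P_N$, gives $\tau_N(u)=\pm\tfrac12\partial_x P_N\big[(Q_N u)(u+P_N u)\big]$. With $B(u)=-\tfrac12\partial_x(u^2)$ my bookkeeping yields a $+$ sign, so I would recheck the sign convention. The sign is irrelevant for the estimate.

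For the $H^{-1}$ bound I would proceed in three moves:
\begin{itemize}
\item $\|\partial_x P_N g\|_{H^{-1}}\le\|g\|_{L^2}$, since $|k|/\langle k\rangle\le1$ and $P_N$ is a contraction.
\item $\|(Q_N u)(u+P_N u)\|_{L^2}\le\|Q_N u\|_{L^2}\,\|u+P_N u\|_{L^\infty}$.
\item $\|u+P_N u\|_{L^\infty}\le C_s\|u+P_N u\|_{H^s}\le 2C_s\|u\|_{H^s}$, using the Sobolev embedding for $s>\tfrac12$ and the fact that $P_N$ is a contraction on $H^s$.
\end{itemize}
Together these give the claim with constant $C_s$ (after absorbing the factor $2\cdot\tfrac12$), independent of $N$ and $u$.

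\emph{Step 3 (cubic reaction).} Here
\[
u^3-(P_N u)^3=(Q_N u)\big(u^2+u\,P_N u+(P_N u)^2\big),
\]
so $\|\tau_N(u)\|_{L^2}\le\|Q_N u\|_{L^2}\cdot 3\max\big(\|u\|_{L^\infty},\|P_N u\|_{L^\infty}\big)^2\le 3C_s^2\,\|Q_N u\|_{L^2}\|u\|_{H^s}^2$. This is the stated form with the constant renamed. The linear part $\varepsilon u_{xx}+u$ of Allen--Cahn was already removed in Step 1.

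\emph{Main obstacle: the ``vanishes iff'' clause.} The direction ``if'' is immediate from the factorization: $Q_N u=0$ implies $\tau_N(u)=0$. The converse does \emph{not} hold as literally stated. Take $u=\cos(Kx)$ with $K>N/2$. Then $P_N u=0$ and $(Q_N u)(u+P_N u)=u^2=\tfrac12+\tfrac12\cos(2Kx)$. Its projection is the constant $\tfrac12$, which $\partial_x$ annihilates, so $\tau_N(u)=0$ even though $Q_N u\neq0$.

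I would therefore prove only the ``if'' direction and replace the converse by a weaker correct statement. One option is a lower bound of the form: $\tau_N(u)\neq0$ whenever some resolved--unresolved triad $k=p+q$ with $|k|\le N/2<|q|$ and $k\neq0$ carries nonzero coefficient product $\hat u_p\hat u_q$. Another is a genericity statement, that $\tau_N(u)\neq0$ for $u$ outside a closed nowhere-dense set. I would also flag this correction in the text.
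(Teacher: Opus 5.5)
Your proof of everything in the statement that is actually true matches the paper's own sketch. You use commutation of the multiplier $L$ with $P_N$ for (i), the difference-of-squares and difference-of-cubes factorizations for the identities, and then the $H^{-1}$ contraction of $\partial_xP_N$, the $L^2\times L^\infty$ H\"older bound, and $H^s\hookrightarrow L^\infty$ with $\|P_Nu\|_{H^s}\le\|u\|_{H^s}$ for the estimates. Both of your corrections are also right, and the paper does not address either. Subtracting as in the definition $\tau_N=F_N(P_Nu)-P_NF(u)$, which is exactly what the paper's sketch does, gives $+\tfrac12\partial_xP_N[(Q_Nu)(u+P_Nu)]$, so the displayed minus sign is a slip. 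The paper also never argues the ``only if'' half of the vanishing clause, and your example $u=\cos(Kx)$ with $K>N/2$ correctly refutes it.

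One of your proposed replacements fails, though. The triad criterion overlooks cancellation between different triads that feed the same output wavenumber. Take $M=N/2$ and $u=\cos x+\alpha\cos((M+1)x)-\cos((2M+1)x)$ with $\alpha\neq0$. The triad $(-1)+(M+1)=M$ has $\hat u_{-1}\hat u_{M+1}=\alpha/4\neq0$. It is cancelled exactly by the unresolved--unresolved triad $-(M+1)+(2M+1)=M$, whose product is $-\alpha/4$. No other pair involving an unresolved mode lands in $0<|k|\le M$, so $P_N[u^2-(P_Nu)^2]$ is constant and $\tau_N(u)=0$. The criterion is valid only if you add that this triad is the sole nonzero contribution at its wavenumber. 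Your genericity version is the right replacement. $\tau_N$ is a homogeneous polynomial map, continuous from $H^s$ into $H^{-1}$ and not identically zero, so its zero set is closed. Expanding $\tau_N(u+\epsilon w)$ in powers of $\epsilon$ then shows that this set has empty interior.
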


\smallskip\noindent\emph{Proof sketch.}\ (i) $L$ is diagonal in Fourier space, hence commutes with $P_N$. (ii) Since $\partial_x,P_N$ are Fourier multipliers, $P_N F(u)=-\tfrac12\partial_x P_N(u^2)$ and (by dealiasing) $F_N(P_N u)=-\tfrac12\partial_x P_N((P_N u)^2)$; subtracting and using $(P_N u)^2-u^2=-(Q_N u)(u+P_N u)$ gives the identity. For the bound, $\partial_x\colon L^2\to H^{-1}$ and $P_N\colon L^2\to L^2$ are contractions, so $\|\tau_N\|_{H^{-1}}\le\tfrac12\|(Q_N u)(u+P_N u)\|_{L^2}\le\tfrac12\|Q_N u\|_{L^2}\|u+P_N u\|_{L^\infty}$, and $\|u+P_N u\|_{L^\infty}\le2C_s\|u\|_{H^s}$ via $\|P_N u\|_{H^s}\le\|u\|_{H^s}$ and $H^s\hookrightarrow L^\infty$. The cubic case uses $a^3-b^3=(a-b)(a^2+ab+b^2)$ with $a=P_N u,\,b=u$.\hfill$\square$\smallskip

\begin{corollary}[The enforced constraint is biased, irreducibly by data or capacity]\label{cor:bias}
Let $u(t)$ solve $u_t=F(u)$ and let $r_N$ be the one-step Crank--Nicolson residual of the projected trajectory $P_N u$. Then $r_N=-P_N\tau_N(\bar u)+O(\Delta t^2)$ for a time-average $\bar u$, so the true projected solution violates $r_N=0$ by $\|P_N\tau_N\|=O(\|Q_N u\|_{L^2})$ exactly zero for linear $F$, nonzero under-resolution for nonlinear $F$. Because this is a property of the constraint (the true target fails it), the induced bias of a model trained to enforce $r_N=0$ cannot be reduced by more data or capacity, only by resolving the dynamics, $\|Q_N u\|_{L^2}\to0$.
\end{corollary}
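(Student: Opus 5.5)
The argument is a Taylor expansion of the Crank--Nicolson residual of the projected exact trajectory, followed by the identity of Proposition~\ref{prop:closure}. Set $v(t)=P_N u(t)$. Because $P_N$ is a Fourier multiplier it commutes with $\partial_t$, so $v_t=P_N F(u)$. Here $r_N$ is Eq.~\ref{eq:physics_residual} evaluated with the coarse operator $F_N$ on the projected snapshots:
\[
r_N=\frac{v(t+\Delta t)-v(t)}{\Delta t}-\tfrac12\big(F_N(v(t))+F_N(v(t+\Delta t))\big).
\]

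\textbf{Step 1: time discretization.} Expand the difference quotient about the midpoint $t_{1/2}=t+\Delta t/2$, so that $\frac{v(t+\Delta t)-v(t)}{\Delta t}=v_t(t_{1/2})+O(\Delta t^2)$. Expand the trapezoidal average of $P_N F(u)$ the same way, $\tfrac12\big(P_NF(u(t))+P_NF(u(t+\Delta t))\big)=P_N F(u(t_{1/2}))+O(\Delta t^2)$. Subtracting these two expansions gives
\[
r_N=-\tfrac12\big(\tau_N(u(t))+\tau_N(u(t+\Delta t))\big)+O(\Delta t^2).
\]
This uses the definition $\tau_N=F_N(P_N\cdot)-P_NF$. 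The constants in the $O(\Delta t^2)$ terms depend on $\|u_{ttt}\|$ and on time derivatives of $F(u)$ along the solution. I will assume enough regularity, for example $u\in C^3([t,t+\Delta t];H^s)$ with $s$ large, which holds for the smooth reference trajectories of Appendix~\ref{app:families}.

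\textbf{Step 2: identify $\bar u$ and the projection.} Since $F_N=P_NFP_N$, the closure term already lies in the range of $P_N$, so $P_N\tau_N=\tau_N$. Replacing the trapezoidal average of $\tau_N$ by its value at a single averaged state $\bar u$ needs care, because $\tau_N$ is nonlinear. I would first try taking $\bar u=u(t_{1/2})$ and absorbing the discrepancy into the $O(\Delta t^2)$ remainder, using that $\tau_N\circ u$ is $C^2$ in time. The corollary only claims the residual equals $-P_N\tau_N(\bar u)$ for some time-averaged state, and the midpoint serves as that state.

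\textbf{Step 3: size and dichotomy.} In the linear case, part (i) of Proposition~\ref{prop:closure} gives $\tau_N\equiv0$, so the closure part of $r_N$ vanishes exactly. In the nonlinear case, part (ii) gives
\[
\|\tau_N(\bar u)\|\le C_s\|Q_N\bar u\|_{L^2}\|\bar u\|_{H^s}^{p},
\]
with $p=1$ in $H^{-1}$ for the quadratic flux and $p=2$ in $L^2$ for the cubic reaction. This yields $O(\|Q_N u\|_{L^2})$. The explicit form $-\tfrac12\partial_xP_N[(Q_Nu)(u+P_Nu)]$ shows $\tau_N$ is generically nonzero whenever $Q_Nu\neq0$.

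The irreducibility claim is structural rather than analytic, and I will argue it directly. The true target $v$ fails the constraint by $\tau_N$, which depends only on $u$ and $N$. Any minimizer of the objective in Eq.~\ref{eq:training_objective} with $\lambda>0$ therefore trades data fit against a residual whose zero set excludes $v$. Adding data or capacity changes neither $v$ nor $\tau_N$, so the bias can vanish only as $\|Q_Nu\|_{L^2}\to0$.

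The main obstacle is Step 2. Collapsing the average of a nonlinear $\tau_N$ into a single-state evaluation, with a uniform $O(\Delta t^2)$ remainder, requires bounds on time derivatives of $\tau_N(u(t))$. Those bounds may themselves carry factors of $N$ through $\partial_x$. I would state them in $H^{-1}$, where $\partial_x$ costs nothing, to keep the constants $N$-independent.
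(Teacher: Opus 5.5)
Your proposal is correct and follows the argument the paper leaves implicit: substitute $F_N(P_Nu)=P_NF(u)+\tau_N(u)$ into the Crank--Nicolson residual, absorb the trapezoidal truncation error and the midpoint replacement $\bar u=u(t+\Delta t/2)$ into $O(\Delta t^2)$, and read off the size and the linear/nonlinear dichotomy from Proposition~\ref{prop:closure}. The irreducibility step needs one tightening: ``the zero set excludes $v$'' does not by itself show the minimizer differs from $v$. What is needed is that $v$ is not stationary. The data term has zero gradient at the true target, while the penalty gradient there is $2\lambda(\partial_{u_1}r)^{*}r$ with $\partial_{u_1}r=\Delta t^{-1}I-\tfrac12DF_N(u_1)$ invertible for small $\Delta t$, so this gradient is nonzero whenever $\tau_N\neq 0$. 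Hence the population minimizer is displaced from $P_Nu(t+\Delta t)$ by a nonzero amount driven by $\tau_N$, whatever the data or capacity.
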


\textbf{Refinement rate (verified).} For $u\in H^\sigma$, $\|Q_N u\|_{L^2}\le(N/2)^{-\sigma}\|u\|_{H^\sigma}$, so $\|\tau_N\|_{H^{-1}}\le C_s(N/2)^{-\sigma}\|u\|_{H^\sigma}\|u\|_{H^s}$: the closure term vanishes under refinement at the rate of the solution's spectral regularity $\sigma$. We verify both the bound and the rate directly (Fig.~\ref{fig:closurerate}). Computing the dealiased $\tau_N$ on a fine reference grid, the linear families sit at machine epsilon (confirming $\tau_N\equiv0$), while for the nonlinear families $\|\tau_N\|$ is proportional to $\|Q_N u\|$ with fitted slope $0.99$--$1.01$ in the bound's norm ($H^{-1}$ for the quadratic-flux families, $L^2$ for the cubic reaction), and decays at the tail rate (Burgers $4.3$ vs.\ $\sigma=4.3$; KdV $3.7$ vs.\ $3.8$). At the viscous coefficients used these solutions are smooth ($\sigma\approx4$); in the inviscid, shock-forming limit $\sigma$ would fall toward the regularity of a jump, sharpening the resolution boundary a regime outside the coefficients studied here.

\begin{figure}[tb]
\centering
\includegraphics[width=\linewidth]{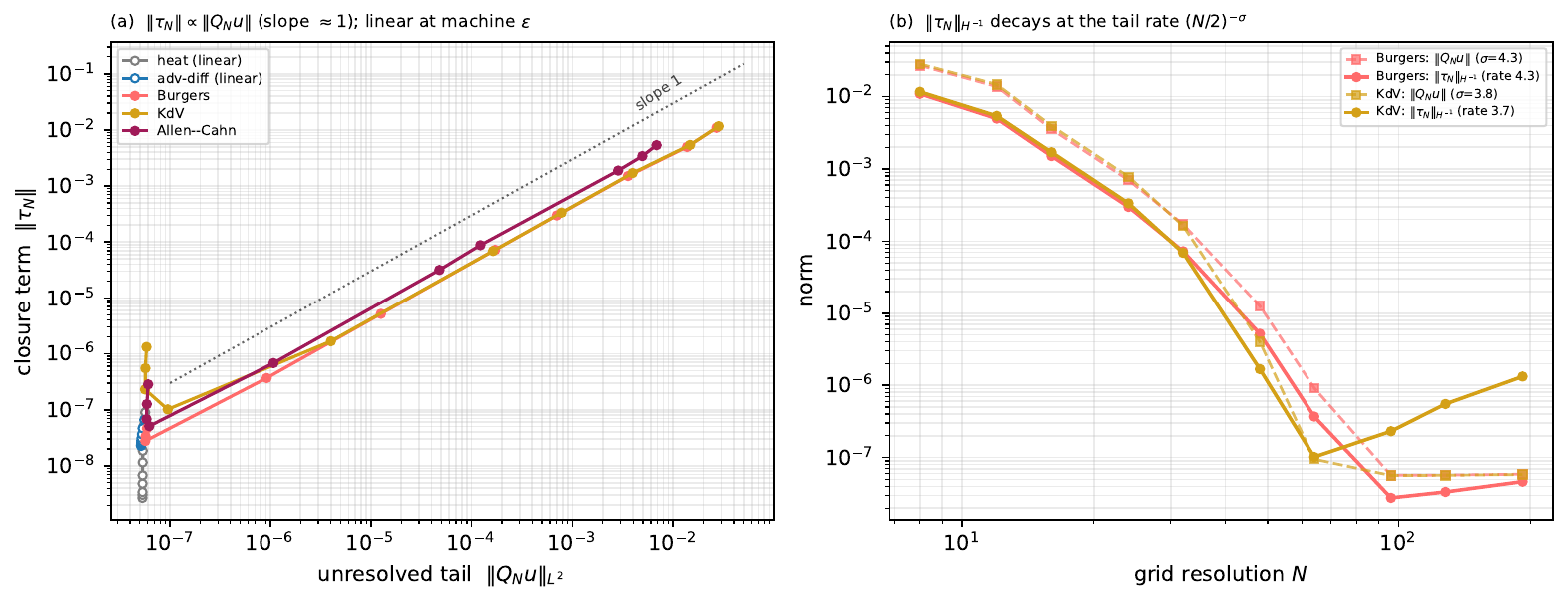}
\caption{\textbf{Direct verification of the closure bound and its refinement rate (Prop.~\ref{prop:closure}).} The dealiased Galerkin closure term $\tau_N=P_N[F(P_N u)-F(u)]$ computed on a fine reference grid. (a) $\|\tau_N\|$ versus the unresolved tail $\|Q_N u\|_{L^2}$: the nonlinear families fall on the slope-$1$ line predicted by $\|\tau_N\|\le C\|Q_N u\|\,\|u\|_s$ (fitted slopes $0.99$, $0.98$, $1.01$ in the bound's norm), while the linear families collapse to a machine-$\epsilon$ cluster, $\tau_N\equiv0$. (b) For Burgers and KdV, $\|\tau_N\|_{H^{-1}}$ (solid) and $\|Q_N u\|$ (dashed) decay in parallel under refinement, the closure rate matching the fitted spectral-regularity exponent $\sigma$.}
\label{fig:closurerate}
\end{figure}

\section{Architectures, hyperparameters, selection, and cost}
\label{app:hparams}
\textbf{Architectures.} 1D: Fourier Neural Operator (FNO) in residual form, $16$ retained modes, width $32$, $3$ spectral-convolution layers, GELU; the from-scratch UNet baseline (Fig.~\ref{fig:conditioning}) matches parameter budget with a periodic-convolution encoder--decoder. 2D: FNO2d ($12$ modes per axis, width $32$, $4$ layers) and a matched CNN2d ($5\times5$ periodic convolutions, width $32$, $4$ layers), both mask-conditioned (field, obstacle-mask) input channels. Optimization is Adam with a warmup--cosine-decay schedule (peak lr $10^{-3}$, $10\%$ warmup, $5\%$ floor) and global-norm gradient clipping at $1.0$; batch $64$.

\begin{center}\footnotesize
\begin{tabular}{lcccccccc}
\hline
Experiment (figure) & $N$ & $K$ & width & modes & depth & epochs & seeds & training \\
\hline
Headline / linear controls (\ref{fig:headline}a) & 64 & 1 & 32 & 16 & 3 & 150 & 5 & one-step \\
Capacity sweep (\ref{fig:headline}b) & 64 & 1 & $\{8..128\}$ & 16 & 3 & 150 & 5 & one-step \\
Pushforward ladder (\ref{fig:pushforward}) & 64 & 4 & 32 & 16 & 3 & 200 & 5 & 1- vs.\ $K$-step \\
Conditioning / arch (\ref{fig:conditioning}) & 64 & 5 & 32 & 16 & 3 & 150/200 & 5 & $K$-step \\
Relevance gradient (\ref{fig:relevance}) & 64 & 5 & 32 & 16 & 3 & 150/150 & 5$\times$3 & $K$-step \\
Resolution invariance (\ref{fig:resolution}) & 128$\to$256/512 & 4 & 32 & 16 & 3 & 300 & 5 & $K$-step \\
Geometry (\ref{fig:geometry},\ref{fig:geometry_soln}) & 64 & 4 & 32 & 12 & 4 & 200 & 5 & $K$-step \\
\hline
\end{tabular}
\end{center}

\textbf{Penalty and regularizer grids.} The physics arm sweeps $\lambda\in\{0.01,\,0.1,\,1.0\}$. The generic-regularizer baseline is the \emph{best of} a matched-budget grid: data-only (no regularizer), weight decay $\in\{10^{-4},10^{-3}\}$, and input-noise augmentation (std $10^{-2}$) four configurations, the same cardinality as is afforded the physics arm.

\textbf{Selection protocol (consistent with Section~2.4).} For each arm we train every configuration in its grid, select the one with the lowest error on a held-out \emph{validation} ensemble (a third trajectory set, seed-disjoint from training and test), and report that configuration's error on the \emph{test} ensemble; the same rule is applied to the physics and the generic arm. Because selection never sees the test set, both the absolute errors and the ratio $E_{\mathrm{generic}}/E_{\mathrm{physics}}$ are unbiased. Splits are seed-controlled: training seed $10$, validation seed $55$, test seed $99$.

\textbf{Training-cost overhead.} Measured on dual RTX~PRO~6000 (Blackwell) GPUs at the headline config ($N=64$, width $32$, batch $64$, $\lambda=0.1$; $200$ timed steps after JIT warmup, device-synchronized), the physics penalty adds $11$--$23\%$ wall-clock per training step relative to a data-only step. The overhead tracks the number of spectral derivatives in $F$: $+11$--$12\%$ for the single-derivative families (heat, Allen-Cahn) and $+20$--$23\%$ for the two- and three-derivative families (advection diffusion, Burgers, KdV). In absolute terms this is $0.02$--$0.05$\,ms per step one $O(N\log N)$ evaluation of $F$ (FFT-based spectral derivatives) plus the Crank--Nicolson combination negligible beside the multi-layer operator forward/backward pass. A timing harness reproducing these figures is released with the code.

\section{Pre-registration record}
\label{app:prereg}
Before the deciding flip run we registered four hypotheses and falsification criteria (full record: \texttt{PGWM\_flip\_preregistration.md}).
\begin{itemize}
\item \textbf{H-flip-1 (inert dense end):} the physics advantage vanishes at full supervision ($\rho=1$). \emph{Decision:} reject if the $\rho=1$ interval excludes parity for any nonlinear family. \textbf{Outcome: falsified} physics wins at $\rho=1$ for all three nonlinear families beyond their intervals.
\item \textbf{H-flip-2 (monotone sparsity trend):} the advantage increases as $\rho$ decreases. \textbf{Outcome: supported}, but reinterpreted this is the ordinary ``physics helps more when data is scarce,'' not a gating threshold.
\item \textbf{H-flip-3 (bandwidth crossover):} a crossover from helpful to inert occurs near $\rho^\star\approx 2m/N$ set by the solution bandwidth $m$. \emph{Decision:} confirm only if an inert dense regime exists above $\rho^\star$. \textbf{Outcome: falsified}  with no inert dense end (H-flip-1), there is no crossover for $\rho^\star$ to mark. Measured $\rho^\star=0.156$ ($m=5$, $N=64$) for all families.
\item \textbf{H-flip-4 (bandwidth separation across families):} families separate by bulk spectral bandwidth $m_{99}$. \textbf{Outcome: untestable} on this suite all five families share $m_{99}=5$, so a bandwidth-tracking law cannot be distinguished here.
\end{itemize}
We report all four as registered, including the two falsifications, and do not re-cast the headline as a sparsity law.

\section{Per-condition results with confidence intervals}
\label{app:supp}
Held-out one-step relative $L_2$ error for the best generic-regularized arm ($E_{\mathrm{generic}}$) and the best physics arm ($E_{\mathrm{physics}}$), their ratio, and the difference $\Delta=E_{\mathrm{generic}}-E_{\mathrm{physics}}$ with its $95\%$ bootstrap CI (percentile, $500$ resamples, $5$ seeds), for every family and observation density $\rho$ ($N=64$). Every $\Delta$ interval excludes zero, so the ratio $>1$ is significant at all conditions.

\begin{center}\footnotesize
\begin{longtable}{llccccc}
\hline
family & $\rho$ & $E_{\mathrm{generic}}$ & $E_{\mathrm{physics}}$ & ratio & $\Delta$ & $95\%$ CI on $\Delta$ \\
\hline
\endhead
heat (lin.) & 0.03 & $1.92\!\times\!10^{-6}$ & $3.54\!\times\!10^{-7}$ & 5.41 & $1.56\!\times\!10^{-6}$ & [6.1e-07, 2.6e-06] \\
heat (lin.) & 0.06 & $1.76\!\times\!10^{-6}$ & $3.55\!\times\!10^{-7}$ & 4.96 & $1.41\!\times\!10^{-6}$ & [5.2e-07, 2.4e-06] \\
heat (lin.) & 0.12 & $1.64\!\times\!10^{-6}$ & $3.53\!\times\!10^{-7}$ & 4.65 & $1.29\!\times\!10^{-6}$ & [4.8e-07, 2.2e-06] \\
heat (lin.) & 0.25 & $1.65\!\times\!10^{-6}$ & $3.51\!\times\!10^{-7}$ & 4.69 & $1.30\!\times\!10^{-6}$ & [4.5e-07, 2.2e-06] \\
heat (lin.) & 0.5 & $1.62\!\times\!10^{-6}$ & $3.51\!\times\!10^{-7}$ & 4.63 & $1.27\!\times\!10^{-6}$ & [4.3e-07, 2.2e-06] \\
heat (lin.) & 1.0 & $1.63\!\times\!10^{-6}$ & $3.55\!\times\!10^{-7}$ & 4.58 & $1.27\!\times\!10^{-6}$ & [4.3e-07, 2.2e-06] \\
adv-diff (lin.) & 0.03 & $4.33\!\times\!10^{-6}$ & $1.72\!\times\!10^{-6}$ & 2.52 & $2.62\!\times\!10^{-6}$ & [2.1e-06, 3.3e-06] \\
adv-diff (lin.) & 0.06 & $3.42\!\times\!10^{-6}$ & $1.72\!\times\!10^{-6}$ & 1.99 & $1.70\!\times\!10^{-6}$ & [1.1e-06, 2.3e-06] \\
adv-diff (lin.) & 0.12 & $2.98\!\times\!10^{-6}$ & $1.71\!\times\!10^{-6}$ & 1.74 & $1.26\!\times\!10^{-6}$ & [8.2e-07, 1.7e-06] \\
adv-diff (lin.) & 0.25 & $2.60\!\times\!10^{-6}$ & $1.72\!\times\!10^{-6}$ & 1.52 & $8.87\!\times\!10^{-7}$ & [5.8e-07, 1.3e-06] \\
adv-diff (lin.) & 0.5 & $2.49\!\times\!10^{-6}$ & $1.72\!\times\!10^{-6}$ & 1.45 & $7.73\!\times\!10^{-7}$ & [4.4e-07, 1.2e-06] \\
adv-diff (lin.) & 1.0 & $2.47\!\times\!10^{-6}$ & $1.73\!\times\!10^{-6}$ & 1.43 & $7.39\!\times\!10^{-7}$ & [4.0e-07, 1.1e-06] \\
Burgers & 0.03 & $1.54\!\times\!10^{-4}$ & $1.73\!\times\!10^{-5}$ & 8.90 & $1.37\!\times\!10^{-4}$ & [1.3e-04, 1.5e-04] \\
Burgers & 0.06 & $1.09\!\times\!10^{-4}$ & $1.69\!\times\!10^{-5}$ & 6.47 & $9.24\!\times\!10^{-5}$ & [9.0e-05, 9.5e-05] \\
Burgers & 0.12 & $8.64\!\times\!10^{-5}$ & $1.77\!\times\!10^{-5}$ & 4.88 & $6.87\!\times\!10^{-5}$ & [6.5e-05, 7.2e-05] \\
Burgers & 0.25 & $7.34\!\times\!10^{-5}$ & $1.72\!\times\!10^{-5}$ & 4.27 & $5.62\!\times\!10^{-5}$ & [5.4e-05, 5.9e-05] \\
Burgers & 0.5 & $6.45\!\times\!10^{-5}$ & $1.73\!\times\!10^{-5}$ & 3.73 & $4.72\!\times\!10^{-5}$ & [4.4e-05, 5.1e-05] \\
Burgers & 1.0 & $5.85\!\times\!10^{-5}$ & $1.71\!\times\!10^{-5}$ & 3.41 & $4.13\!\times\!10^{-5}$ & [3.9e-05, 4.3e-05] \\
KdV & 0.03 & $1.87\!\times\!10^{-4}$ & $1.97\!\times\!10^{-5}$ & 9.48 & $1.67\!\times\!10^{-4}$ & [1.5e-04, 1.8e-04] \\
KdV & 0.06 & $1.23\!\times\!10^{-4}$ & $1.98\!\times\!10^{-5}$ & 6.21 & $1.03\!\times\!10^{-4}$ & [9.9e-05, 1.1e-04] \\
KdV & 0.12 & $9.82\!\times\!10^{-5}$ & $2.00\!\times\!10^{-5}$ & 4.91 & $7.82\!\times\!10^{-5}$ & [7.5e-05, 8.1e-05] \\
KdV & 0.25 & $7.90\!\times\!10^{-5}$ & $2.06\!\times\!10^{-5}$ & 3.84 & $5.84\!\times\!10^{-5}$ & [5.4e-05, 6.2e-05] \\
KdV & 0.5 & $7.06\!\times\!10^{-5}$ & $2.05\!\times\!10^{-5}$ & 3.44 & $5.01\!\times\!10^{-5}$ & [4.6e-05, 5.4e-05] \\
KdV & 1.0 & $6.53\!\times\!10^{-5}$ & $2.05\!\times\!10^{-5}$ & 3.19 & $4.48\!\times\!10^{-5}$ & [4.2e-05, 4.7e-05] \\
Allen--Cahn & 0.03 & $2.15\!\times\!10^{-4}$ & $2.65\!\times\!10^{-5}$ & 8.11 & $1.88\!\times\!10^{-4}$ & [1.6e-04, 2.1e-04] \\
Allen--Cahn & 0.06 & $9.93\!\times\!10^{-5}$ & $2.65\!\times\!10^{-5}$ & 3.75 & $7.28\!\times\!10^{-5}$ & [6.0e-05, 8.9e-05] \\
Allen--Cahn & 0.12 & $6.45\!\times\!10^{-5}$ & $2.65\!\times\!10^{-5}$ & 2.44 & $3.81\!\times\!10^{-5}$ & [3.1e-05, 4.7e-05] \\
Allen--Cahn & 0.25 & $5.64\!\times\!10^{-5}$ & $2.66\!\times\!10^{-5}$ & 2.13 & $2.99\!\times\!10^{-5}$ & [2.6e-05, 3.4e-05] \\
Allen--Cahn & 0.5 & $5.21\!\times\!10^{-5}$ & $2.66\!\times\!10^{-5}$ & 1.96 & $2.55\!\times\!10^{-5}$ & [2.2e-05, 2.8e-05] \\
Allen--Cahn & 1.0 & $5.01\!\times\!10^{-5}$ & $2.65\!\times\!10^{-5}$ & 1.89 & $2.36\!\times\!10^{-5}$ & [2.2e-05, 2.6e-05] \\
\hline
\end{longtable}
\end{center}

\section{Repository layout and provenance}
\label{app:repo}
Code spans two directories with some overlapping filenames; each figure script states its source directory (\texttt{sys.path}) to avoid importing a mismatched module (e.g.\ \texttt{model.py} and \texttt{physics.py} differ between them). Phase-2 (flip/headline, capacity, conditioning, relevance): \texttt{world\_model\_v2/}. Phase-1 (pushforward ladder, resolution invariance, geometry): \texttt{World\_Model/}. Figure scripts and a master driver that reproduce all panels from the run JSONs are released with the paper.

\section{Multi-target relevance replication}
\label{app:relmulti}
The relevance gradient is run for three held-out targets---Burgers, KdV, and Allen--Cahn---each pretraining four source conditions and finetuning over $M\times\text{reps}\times\text{seeds}$ ($5\times3$ at each $M$). Table~\ref{tab:relmulti} reports the data-adequate endpoint ($M=16$); full per-$M$ curves are Fig.~\ref{fig:relevance}. The refutation reproduces on both new targets: the most relevant source (\texttt{pde\_related}) is significantly worse than from-scratch, the rich non-physical operator (\texttt{struct\_random}) is inert (KdV) or marginally helpful (Allen Cahn), and structureless sources (\texttt{noise\_map}, \texttt{pure\_noise}) are destructive. The single non-monotone detail is a transient KdV head start from relevant-physics pretraining at $M=1$ ($0.525$ $[0.498,0.550]$ vs from-scratch $0.632$ $[0.567,0.699]$) that reverses by $M=4$.

\begin{table}[t]
\centering
\caption{Held-out rollout rel-$L_2$ at $M=16$ (bootstrap $95\%$ CI, $5\times3$ runs). ``Related'' denotes \texttt{pde\_related}, ``Rich'' denotes \texttt{struct\_random}, ``No map'' denotes \texttt{noise\_map}, and ``Noise'' denotes \texttt{pure\_noise}.}
\label{tab:relmulti}
\small
\setlength{\tabcolsep}{4pt}
\begin{tabular}{llccccc}
\hline
Target & Metric & Related & Rich & No map & Noise & Scratch \\
\hline
KdV & Mean & 0.473 & 0.428 & 0.797 & 0.994 & 0.427 \\
& 95\% CI & [.464,.484] & [.412,.442] & [.758,.846] & [.941,1.05] & [.410,.443] \\
Allen--Cahn & Mean & 0.185 & 0.048 & 1.184 & 1.370 & 0.055 \\
& 95\% CI & [.172,.198] & [.046,.050] & [1.04,1.32] & [1.20,1.54] & [.054,.056] \\
\hline
\end{tabular}
\end{table}

\end{document}